\newcommand{\paran}[1]{\left( #1 \right)}
\newcolumntype{B}{>{\centering\arraybackslash}m{3.3cm}}
\newcolumntype{L}{>{\centering\arraybackslash}p{1.5cm}}
\let\oldnl\nl
\newcommand{\nonl}{\renewcommand{\nl}{\let\nl\oldnl}}
\newtheorem{theorem}{Theorem}
\title{On a Utilitarian Approach to Privacy Preserving Text Generation}
\author{Zekun Xu \\
  Amazon \\
  Seattle, WA USA \\
  \texttt{\small zeku@amazon.com} \\\And
  Abhinav Aggarwal \\
  Amazon\\
  Seattle, WA USA \\
  \texttt{\small aggabhin@amazon.com} \\\And
  Oluwaseyi Feyisetan \\
  Amazon \\
  Seattle, WA USA\\
  \texttt{\small sey@amazon.com} \\\And
  Nathanael Teissier \\
  Amazon \\
  Arlington, VA USA\\
  \texttt{\small natteis@amazon.com}}
\begin{document}

\maketitle
\begin{abstract}
Differentially-private mechanisms for text generation typically add carefully calibrated noise to input words and use the nearest neighbor to the noised input as the output word. When the noise is small in magnitude, these mechanisms are susceptible to reconstruction of the original sensitive text. This is because the nearest neighbor to the noised input is likely to be the original input.
To mitigate this empirical privacy risk, we propose a novel class of differentially private mechanisms that parameterizes the nearest neighbor selection criterion in traditional mechanisms. Motivated by Vickrey auction, where only the second highest price is revealed and the highest price is kept private, we balance the choice between the first and the second nearest neighbors in the proposed class of mechanisms using a tuning parameter. This parameter is selected by empirically solving a constrained optimization problem for maximizing utility, while maintaining the desired privacy guarantees. 
We argue that this empirical measurement framework can be used to align different mechanisms along a common benchmark for their privacy-utility tradeoff, particularly when different distance metrics are used to calibrate the amount of noise added. Our experiments on real text classification datasets show up to 50\% improvement in utility compared to the existing state-of-the-art with the same empirical privacy guarantee.

\end{abstract}

\section{Introduction}
Over the past decade, privacy-preserving machine learning has emerged as a hot topic in a variety of real world speech and language applications. In natural language processing (NLP), ensuring data privacy in machine learning tasks is especially challenging because text data tends to be rich in sensitive and potentially identifiable information about the users that contributed to these datasets.  

The literature is replete with approaches proposed for privacy-preserving text analysis, such as replacing sensitive information with general terms \cite{cumby2011machine,anandan2012t,sanchez2016c}, injecting additional words into original texts \cite{domingo2009h,pang2010embellishing,sanchez2013knowledge}, as well as k-anonymity and its variants \cite{sweeney2002k,machanavajjhala2007diversity,li2007t}. However, these methods are provably non-private and have been shown to be vulnerable to re-identification attacks \cite{korolova2009releasing,petit2015peas}. To ensure a quantifiable privacy guarantee, differential privacy (DP) has become the \emph{de facto} standard for privacy-preserving statistical analysis \cite{dwork2006calibrating,dwork2008differential,dwork2014algorithmic}, with applications to text analysis. 

At a high level, a randomized algorithm is differentially private if the output distributions from any two neighboring databases are (near) indistinguishable. This indistinguishability is controlled by a privacy parameter, which, in the case of text analysis, is often scaled by the distance between neighboring datasets to capture the semantic similarity between different words~\cite{feyisetan2019leveraging,fernandes2019generalised,feyisetan2020privacy,xu2020mahalanobis}. This calibration enables the mechanisms to enjoy \emph{metric-DP}~\cite{andres2013geo,chatzikokolakis2013broadening}, which was first introduced as a generalization of local DP~\cite{kasiviswanathan2011can} for protecting location privacy. Observe that a direct application of local DP mechanisms will be too restrictive because it requires that the probability ratio between the output distributions of any two words in the vocabulary be bounded by some fixed constant. Due to the high dimensional nature of textual tasks and very large vocabulary sizes (e.g. 2.2M words for \textsc{GloVe} common crawl~\cite{pennington2014glove}), this can lead to adding a lot of noise for achieving the desired privacy guarantees, severely impacting the utility of the NLP task. 

\noindent\textbf{Comparing Metric-DP Mechanisms.} In the context of text analysis, we are given a vocabulary set $\mathcal{W}$ and an embedding function $\phi:\mathcal{W}\to\mathbb{R}^p$, where $p$ is the dimensionality of the embedding model. For any $\epsilon>0$, a mechanism $M:\mathcal{W}\to\mathcal{W}$ is said to be $\epsilon$ differentially private with respect to a given metric $d:\mathbb{R}^p\times\mathbb{R}^p\to [0,\infty)$ if for any $w,w',\hat{w}\in\mathcal{W}$, the following holds:
\begin{equation}\label{mdp}
\frac{\Pr\{M(w)=\hat{w}\}}{\Pr\{M(w')=\hat{w}\}}\leq e^{\epsilon d\{\phi(w),\phi(w')\}}.
\end{equation} 
The probabilistic guarantee in (\ref{mdp}) ensures that the log probability ratio of observing any output $\hat{w}$ given two inputs $w$ and $w'$ is bounded by $\epsilon d\{\phi(w),\phi(w')\}$. This makes metric-DP less restrictive in that the indistinguishability of the output distributions is scaled by the distance between the inputs. If $d\{\phi(w),\phi(w')\}=\mathbbm{1}(w\ne w')$, then metric-DP reduces to standard DP.

Note that while metric-DP allows for a flexible privacy budget calibrated by not only $\epsilon$ but also the distance metric, this flexibility makes it harder to interpret the privacy parameter $\epsilon$. For example, in standard DP, $\epsilon=30$ essentially means negligible privacy guarantee since $e^{30}$ is an astronomically large probability ratio; however, $\epsilon=30$ is common in the metric-DP literature \cite{fernandes2019generalised,feyisetan2020privacy,xu2020mahalanobis} and still provides meaningful privacy guarantees. This is because the pairwise distance in the word embedding space can be small floating numbers, which brings $\exp\paran{30d\{\phi(w),\phi(w')\}}$ to a reasonable scale. Thus, $\epsilon$ alone cannot fully characterize the privacy guarantee without the knowledge of the underlying metric space. More importantly, this indicates that the privacy guarantees from DP mechanisms with respect to different metrics are not directly comparable using only their $\epsilon$ values.
 
\paragraph{Our Contributions.} A common feature in the existing metric-DP text generation mechanisms is to add a calibrate noise to the input word embedding and then output the nearest neighbor to the noisy embedding as the output.
 However, when the additive noise is small in magnitude, the input word is likely to remain unchanged, which may constitute an empirical privacy risk because it is trivial for the adversary to reconstruct the original word. To mitigate this issue, we present a novel class of metric-DP text generation mechanisms in this paper. Motivated by the Vickrey auction \cite{vickrey1961counterspeculation} scheme, also known as the second-price auction, 
we refer to this class of mechanisms as \emph{Vickrey mechanisms}. 

 Just as in a Vickrey auction, where only the second highest price is revealed\footnote{to ensure incentive-compatibility} and the highest price is kept private, the proposed Vickrey mechanisms generalize the noisy nearest neighbor selection by including the second nearest neighbor in the selection pool using a tuning parameter. 
The inclusion of the second nearest neighbor 
greatly reduces the empirical reconstruction risk on the original word. 
 
To select the tuning parameter above, we present a strategy based on optimizing the empirical privacy-utility tradeoff. The empirical privacy measurement is constructed in the context of analysis on de-identified text, which quantifies the risk on how well an adversary can reconstruct the original text based on the observed (possibly perturbed) text. The better the reconstruction, the lower the empirical privacy guarantee. This general framework allows comparing text generation mechanisms that use different distance metrics (see Section~\ref{sec:tuning}). 

We emphasize that our empirical privacy metric does not supersede the metric-DP guarantee; instead, it provides a new dimension along which different metric-DP mechanisms can be aligned. We say that, within the class of metric-DP mechanisms, an \emph{optimal} mechanism is the one that maximizes the empirical privacy guarantee while keeping the utility loss of the downstream task under some maximum tolerable budget. This definition for privacy-utility tradeoff, modeled as a constrained optimization problem, resembles the literature on protecting privacy for location data \cite{shokri2011quantifying,shokri2012protecting,clark2019privacy}. We extend the analysis for the broader class of metric-DP mechanisms. 
 Additionally, in our experiments, we demonstrate that our proposed Vickrey mechanisms outperform existing mechanisms with respect to the empirical privacy-utility tradeoff on real text classification datasets.
 


\paragraph{Related Work.} Metric-DP  \cite{andres2013geo,chatzikokolakis2013broadening,laud2020framework}, an extended notion of local DP \cite{kasiviswanathan2011can}, is a popular tool for privacy-preserving text analysis. A text generation mechanism that satisfies DP with respect to the hyperbolic distance metric was proposed in \cite{feyisetan2019leveraging}. This mechanism requires specialized training of word embeddings in the high-dimensional hyperbolic space. For word embeddings in the Euclidean space, like \textsc{GloVe} \cite{pennington2014glove} or \textsc{FastText} \cite{bojanowski2017enriching}, mechanisms like the Laplace mechanism ($L_2$ metric)~\cite{fernandes2019generalised,feyisetan2020privacy} and the Mahalanobis mechanism (using a regularized Mahalanobis metric) \cite{xu2020mahalanobis} have been proposed. However, a structured comparison of these different mechanisms remains unclear.

\emph{Empirical privacy measurements.} A variety of empirical techniques for privacy measurement have been proposed for many different applications. 
In the membership inference attack literature \cite{shokri2017membership,yeom2018privacy,salem2018ml,song2019auditing}, an AUC based detectability metric is commonly used to quantify the information leakage from machine learning models about their training data. However, the model trained on a given dataset can only serve as a proxy to estimate its privacy guarantee. Moreover, the detectability metric can vary across different machine learning models and implementations of the inference attack based auditors. 

Hypothesis testing based approaches have also been proposed to empirically estimate $\epsilon$ \cite{ding2018detecting,gilbert2018property,liu2019minimax}. However, the assumptions in these methods constrain their general applicability. 
In a recent line of work on privacy-preserving text analysis \cite{feyisetan2020privacy,xu2020mahalanobis}, privacy statistics defined as (i) probability of inputs not being redacted, and (ii) number of distinct outputs given a fixed input, have been used to characterize the empirical privacy of a text generation mechanisms. While those metrics are intuitive and descriptive, there is not a direct association that relates them to the privacy leakage. Within the class of metric-DP text generation mechanisms, the corresponding definition of empirical privacy-utility tradeoff is a constrained optimization to maximize the empirical privacy while keeping the utility loss under a preset budget.
This constrained setup can find its precedent in the location data privacy literature \cite{shokri2011quantifying,shokri2012protecting,clark2019privacy}. We differ in their approach as we require the optimal mechanism to also satisfy metric-DP. 


\section{The Class of Vickrey Mechanisms}

To motivate our construction of the Vickrey mechanisms, we begin by discussing the limitations of a general approach in the existing metric DP text generation mechanisms. We denote by $\mathcal{W} = \{w_1,\ldots,w_n\}$ the vocabulary set containing $n$ distinct words, and by $\phi: \mathcal{W}\to\mathbb{R}^p$ a fixed embedding function that maps each word in the vocabulary set to a $p-$dimensional real vector (referred to as the embedding for the word).  

A common first step is to sample an additive noise $Z$ from a density function $p(z)\propto\exp\{-d(z,0)\}$, where $d$ is the distance metric used in the mechanism\footnote{We use the standard definition of a \emph{metric}, which requires the distance function to satisfy (1) $d(x,x) = 0$ for all $x$; (2) $d(x,y) > 0$ for $y\neq x$; and, (3) the triangle inequality.}.
For example, the Laplace mechanism uses $d(x,y)=\|x-y\|_2$ (also known as Euclidean or $L_2$ distance), and the Mahalanobis mechanism uses $d(x,y)=\sqrt{(x-y)\Sigma^{-1}(x-y)}$ (also known as Mahalanobis distance), where $\Sigma$ is the sample covariance of the word embeddings.

Once the noise is sampled, it is then added to the input word embedding and the word with an embedding that is nearest to this noised embedding is chosen as the output: $$w_{output} = \arg\min_{w \in \mathcal{W}} d(\phi(w_{input})+Z, w).$$ 

A limitation of this noisy nearest neighbor selection is that when $|Z|$ is small (in particular, smaller than half the distance from the input word to its nearest neighbor), the first nearest neighbor to the noised embedding is the same as the original input word. The problem is exaggerated for rare words, which exist in the sparse regions of the embedding space and hence, do not get perturbed even for larger noise scales. This makes it easier for an adversary to reconstruct the original word, which may contain sensitive information (\emph{e.g.} street names).

The proposed Vickrey mechanisms generalize the noisy nearest neighbor selection step by distributing the selection probability between the first and second nearest neighbor\footnote{See Section~\ref{sec:general_construction} for a general construction using $k$ nearest neighbors and our experimental results for the same.} using a tuning parameter $t\in[0,1]$ (see Algorithm~\ref{alg_main}).
Intuitively, this generalization makes the reconstruction of the original input word harder (see Figures~\ref{fig1} and~\ref{combined}).

We capture our intuition for the claim above in Figure \ref{fig1}. For simplicity, the horizontal axis in both plots represents the one-dimensional embedding on a vocabulary containing only 5 words: (A, B, C, D, E). The vertical axis represents the output probability of each word through the mechanism. The plots represent the output probability in the mechanism for each of the 5 words, corresponding to the potential noised embedding values on the horizontal axis. 
The top plot represents the Laplace mechanism when only the first nearest neighbor to the noised embedding is feasible for selection $(t=0)$. In this case, all 5 curves are step functions since only the nearest neighbors are returned. 
The bottom plot shows the output probability for the Vickrey mechanisms, which always impart plausible deniability with another word when the noised embedding falls in any open interval. 

\begin{figure}[t]
    \centering
    \includegraphics[width=\columnwidth]{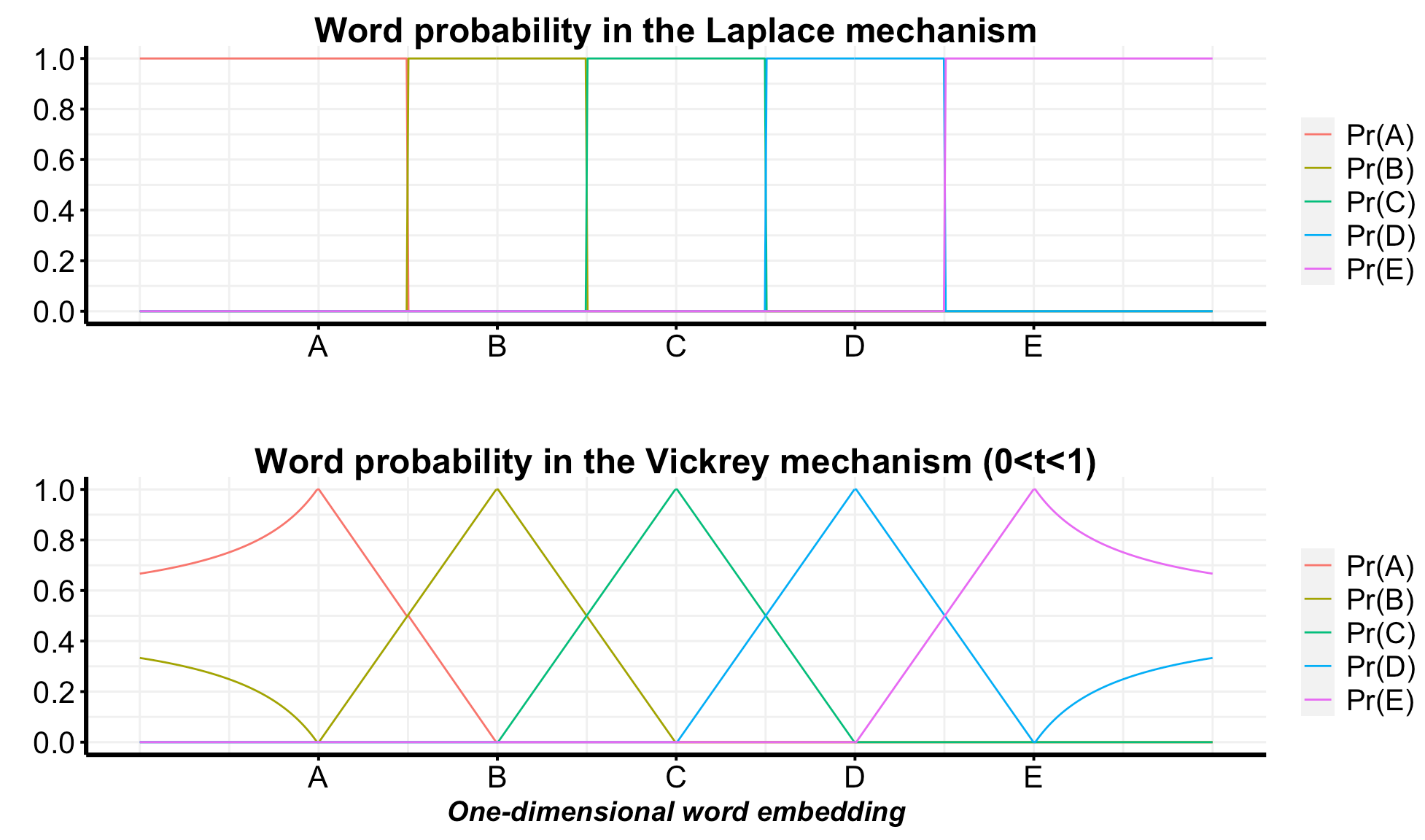}
    \caption{Word probability in the Laplace mechanism (top) and the Vickrey mechanism (bottom) at $0<t<1$ for each of the 5 words as a function of the noised one-dimensional embedding. The Vickrey mechanism in this example always has two candidate words as output.}
    \label{fig1}
\end{figure}
 
\begin{algorithm}[t]\small
\textbf{Input: }String $s=w_1w_2\ldots w_n$, metric $d$, privacy parameter $\epsilon$, tuning parameter $t\in[0,1]$ \\
 \SetAlgoLined
 \For{$w_i \in s$}{
 	Sample $Z$ with density $p(z)\propto \exp\{-\epsilon d(z,0)\}$. \\
 	Obtain $\hat{\phi}_i \gets \phi(w_i)+Z$.\\
 	Let $\tilde{w}_{i1} \gets \substack{\arg\min \\ w \in \mathcal{W}\setminus\{w_i\}} \|\hat{\phi}_i - \phi(w)\|_2$, and
 	$\tilde{w}_{i2} \gets \substack{\arg\min \\ w \in \mathcal{W}\setminus\{w_i, \tilde{w}_{i1}\}} \|\hat{\phi}_i - \phi(w)\|_2$.\\
 	Set $\hat{w}_i \gets \begin{cases} 
 		\tilde{w}_{i1} &\text{ with prob. }p(t,\hat{\phi}_i)\\
 		\tilde{w}_{i2} &\text{ with prob. }1 - p(t,\hat{\phi}_i)\\
 	\end{cases}$,where $p(t,\hat{\phi}_i) = \frac{(1-t)\|\phi(\tilde{w}_{i2})-\hat{\phi}_i\|_2}{t\|\phi(\tilde{w}_{i1})-\hat{\phi}_i\|_2+(1-t)\|\phi(\tilde{w}_{i2})-\hat{\phi}_i\|_2}$.\\
 	}
 \Return $\tilde{s}=\hat{w}_1\hat{w}_2\ldots \hat{w}_n$.
 \caption{The Vickrey Mechanism}
 \label{alg_main}
\end{algorithm}
\paragraph{Overview of Algorithm~\ref{alg_main}.} We outline the main steps for the class of Vickrey mechanisms in Algorithm \ref{alg_main}. For each word in the input, an additive noise $Z$ is sampled according to the  density function $p(z)\propto\exp\{-d(z,0)\}$. Then the Vickrey mechanism will select both the first and second nearest neighbor of the noised embedding as candidates, and randomly output one of them according to probabilities calibrated by their distances to the noised embedding using a tuning parameter $t$. The closer $t$ is to 1, the more the Vickrey mechanism favors the second nearest neighbor.

\noindent\textit{Privacy Analysis.} We formally prove that the Vickrey mechanism $M_t^\epsilon$ at privacy parameter $\epsilon>0$ enjoys $\epsilon$ metric-DP guarantee for any $t\in[0,1]$. 

\begin{theorem}\label{lemma1}
For any $t\in[0,1]$, $\epsilon>0$, metric $d$ and  $w,w',\hat{w}\in\mathcal{W}$, the Vickrey mechanism $M^{\epsilon}_t$ from Algorithm \ref{alg_main} satisfies metric-DP:
\begin{equation*}
\frac{\Pr\{M_t^\epsilon(w)=\hat{w}\}}{\Pr\{M^\epsilon_t(w')=\hat{w}\}}\leq \exp\paran{\epsilon d\{\phi(w),\phi(w')\}}.
\end{equation*} 
\end{theorem}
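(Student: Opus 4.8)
The plan is to treat the Vickrey mechanism as a two-stage process: first draw the noised embedding $\hat{\phi}=\phi(w)+Z$, then map $\hat{\phi}$ to an output word through the randomized selection rule of Algorithm~\ref{alg_main}. The key structural fact is that, conditioned on the realized value of $\hat{\phi}$, the probability of emitting a given word $\hat{w}$ --- written $q(\hat{w}\mid\hat{\phi})$ --- is fixed by $\hat{\phi}$, the embedding geometry, and the tuning parameter $t$, and carries no additional dependence on the originating input word. Hence the second stage is a randomized post-processing of $\hat{\phi}$, and metric-DP for $M_t^\epsilon$ will follow from a metric-DP guarantee on the law of $\hat{\phi}$ together with closure of metric-DP under post-processing; I would establish this directly through the marginalization integral rather than invoking post-processing as a black box.

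First I would write the output probability by marginalizing over the injected noise,
$$\Pr\{M_t^\epsilon(w)=\hat{w}\} = \int_{\mathbb{R}^p} \rho_w(\hat{\phi})\, q(\hat{w}\mid\hat{\phi})\, d\hat{\phi},$$
where $\rho_w(\hat{\phi}) = \frac{1}{C}\exp\{-\epsilon\, d(\hat{\phi}-\phi(w),0)\}$ is the density of $\hat{\phi}=\phi(w)+Z$, $C=\int \exp\{-\epsilon\, d(z,0)\}\,dz$ is the input-independent normalizing constant, and $q(\hat{w}\mid\hat{\phi})$ is the word-selection probability. The identical expression, with the \emph{same} factor $q(\hat{w}\mid\hat{\phi})$, holds for input $w'$.

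The next step is a pointwise bound on the density ratio. Assuming the metric is translation invariant (as for the $L_2$ and Mahalanobis metrics, where $d(x,y)=\|x-y\|$ under the relevant norm), the triangle inequality gives
$$d(\hat{\phi}-\phi(w'),0) - d(\hat{\phi}-\phi(w),0) \le d(\hat{\phi}-\phi(w'),\hat{\phi}-\phi(w)) = d(\phi(w),\phi(w')),$$
so that $\rho_w(\hat{\phi})/\rho_{w'}(\hat{\phi}) \le \exp\{\epsilon\, d(\phi(w),\phi(w'))\}$ for every $\hat{\phi}$, with $C$ cancelling. Substituting $\rho_w(\hat{\phi}) \le e^{\epsilon d(\phi(w),\phi(w'))}\rho_{w'}(\hat{\phi})$ into the numerator integral and pulling the constant outside leaves exactly $\Pr\{M_t^\epsilon(w')=\hat{w}\}$, which yields the claimed ratio.

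I expect the main obstacle to be justifying rigorously that $q(\hat{w}\mid\hat{\phi})$ is a genuine function of $\hat{\phi}$ alone, i.e.\ that the first and second nearest neighbors $\tilde{w}_{1},\tilde{w}_{2}$ and the mixing weight $p(t,\hat{\phi})$ depend on the noised embedding but not on the input word; only then is the second stage a legitimate input-independent post-processing whose factor cancels in the ratio. This in turn requires handling the set of $\hat{\phi}$ equidistant to two candidate words, where the $\arg\min$ defining $\tilde{w}_{1}$ or $\tilde{w}_{2}$ is ambiguous: that set is a finite union of lower-dimensional hypersurfaces, hence of Lebesgue measure zero, so it does not affect the integrals and $q(\hat{w}\mid\hat{\phi})$ is well-defined almost everywhere. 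The only other point needing care is the translation invariance invoked in the density bound, which I would record as a standing assumption satisfied by every metric considered in the paper (those induced by a norm), under which $d(z,0)$ plays the role of $\|z\|$ and the reverse-triangle estimate above holds.
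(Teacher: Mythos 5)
Your proof is, in substance, the paper's own proof. The paper likewise bounds the noise density ratio pointwise via the triangle inequality, $p_w(z)\leq e^{\epsilon d\{\phi(w),\phi(w')\}}p_{w'}(z)$, and then integrates that bound against an input-independent selection kernel; the only cosmetic difference is that where you write an abstract $q(\hat{w}\mid\hat{\phi})$, the paper spells it out by decomposing the output probability over the regions $Q_{\hat{w}}^{w_j}$ (noised embedding has $\hat{w}$ as first nearest neighbor) and $Q^{\hat{w}}_{w_i}$ ($\hat{w}$ as second nearest neighbor), with the mixing weights $\alpha$ made explicit. Your treatment is in fact slightly more careful on two points the paper glosses over: the triangle-inequality step needs $d$ to be translation invariant (norm-induced), not merely a metric as the theorem statement suggests, and the tie sets where the argmins are ambiguous need to be dismissed as Lebesgue-null.

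One caution about the step you yourself flagged as the main obstacle: input-independence of $q(\hat{w}\mid\hat{\phi})$ is \emph{not} true of Algorithm 1 as literally written. The argmins there range over $\mathcal{W}\setminus\{w_i\}$ and $\mathcal{W}\setminus\{w_i,\tilde{w}_{i1}\}$, i.e., the input word is excluded from the candidate pool, so the selection kernel does depend on the input. With that exclusion the theorem even fails: taking $\hat{w}=w'$ with $w\neq w'$ gives $\Pr\{M_t^\epsilon(w')=w'\}=0$ while $\Pr\{M_t^\epsilon(w)=w'\}>0$, so the probability ratio is unbounded. The paper's proof has exactly the same blind spot --- its regions $Q^{w_j}_{w_i}$ are defined by nearest-neighbor relations over all of $\mathcal{W}$, silently ignoring the exclusion in the pseudocode --- so both your argument and the paper's establish the guarantee only for the variant of the mechanism whose candidate pool is all of $\mathcal{W}$ (or any other input-independent pool). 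In short: your proof matches the paper's, but the assertion you used to close your ``main obstacle'' holds for that variant, not for the algorithm's literal text, and resolving the discrepancy requires changing the algorithm rather than the proof.
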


\begin{proof}
Define $Q^{w_j}_{w_i}=\{v\in\mathbb{R}^p: \|v-\phi(w_i)\|_2 < \|v-\phi(w_j)\|_2 < \min_{w\in\mathcal{W} \backslash \{w_i,w_j\}}\|v-\phi(w)\|_2\}$ to be the set that has $w_i$ and $w_j$ as the first and second nearest neighbors. 
Let $p_w(z)$ be the density function for the perturbed embedding conditional on the input $w$: $$p_w(z)\propto \exp\paran{-\epsilon d\{z-\phi(w), 0\}}$$
Since metric $d$ satisfies the triangle inequality, $$d\{z-\phi(w'),0\}-d\{z-\phi(w), 0\}\leq d\{\phi(w), \phi(w')\},$$ we obtain:
$$e^{-\epsilon d\{z-\phi(w), 0\}} \leq e^{\epsilon d\{\phi(w), \phi(w')\}}e^{-\epsilon d\{z-\phi(w'),0\}},$$
which is equivalent to the inequality $p_w(z)\leq e^{\epsilon d\{\phi(w), \phi(w')\}}p_{w'}(z)$.
For brevity, let 
\begin{align*}
    \alpha_{\hat{w}}^{w_j}(t,z)=&\frac{(1-t)\|z-\phi(w_j)\|_2}{t\|z-\phi(\hat{w})\|_2+(1-t)\|z-\phi(w_j)\|_2}
\end{align*} and $\rho(w,\hat{w}) = \Pr\{M^\epsilon_{t}(w)=\hat{w}\}$. Since $\rho(w,\hat{w})$ is a sum of partial probabilities in the areas where $\hat{w}$ is either the first or the second nearest neighbor to the noised embedding, we have:
\begin{align*}
\rho(w,\hat{w})
&= \sum_{j=1}\int_{Q_{\hat{w}}^{w_j}}p_w(z)\alpha_{\hat{w}}^{w_j}(t,z)dz \\ 
&\ \ \ \ \ + \sum_{i=1}\int_{Q^{\hat{w}}_{w_i}}p_w(z)\{1-\alpha_{w_i}^{\hat{w}}(t,z)\}dz \\
&\leq C(w,w') \biggl[\sum_{j=1}\int_{Q_{\hat{w}}^{w_j}}p_{w'}(z)\alpha_{\hat{w}}^{w_j}(t,z)dz\\
&\ \ \ \ \ + \sum_{i=1}\int_{Q^{\hat{w}}_{w_i}}p_{w'}(z)\{1-\alpha_{w_i}^{\hat{w}}(t,z)\}dz\biggr] \\
&=C(w,w') \Pr\{M^\epsilon_t(w')=\hat{w}\},
\end{align*}
where $C(w,w') = e^{\epsilon d\{\phi(w), \phi(w')\}}$, as desired.
\end{proof}


For our experiments, we use the Euclidean distance for $d$ so that the Vickrey mechanism reduces to the Laplace mechanism when $t=0$. In general, any distance function $d$ that satisfies the triangle inequality can be used to ensure the desired metric-DP guarantee (quantified by the parameter $\epsilon$).

\section{Tuning Parameter Selection}\label{sec:tuning}

We now discuss how to select the tuning parameter in Algorithm~\ref{alg_main}. We do this by optimizing an empirical formulation of the privacy-utility tradeoff. We discuss the details of this formulation next. 

\subsection{General Framework for Empirical Privacy Utility Tradeoff} 
Let  $M:\mathcal{W}\to\mathcal{W}$ denote some privacy-preserving text generation mechanism (that maps words to their noised versions).  
Define $f_M(w'|w)\triangleq\Pr\{M(w)=w'\}$ to be the probability of observing $w'$ as the output of the mechanism $M$ from the input word $w$. Note that this probability is conditioned on the knowledge of $w$. We assume a prior probability measure $\pi:\mathcal{W}\to[0,1]$, which represents the adversary's domain knowledge about the NLP task and distribution of words in the dataset under consideration. Depending on the use case, the prior distribution $\pi$ can be chosen as uniform, which means the user has no information on the word distribution in the context; or $\pi$ can be chosen as the empirical word distribution in the corpus on which the user wishes to perform text generation.

Given this formulation, we define the expected utility loss for mechanism $M$ as follows:
\begin{equation}\label{eq_utility}
    L_M\triangleq\sum_{w,w'\in\mathcal{W}}\pi(w)f_M(w'|w)d_L(w,w'),
\end{equation}
where $d_L:\mathcal{W}\times\mathcal{W}\to[0,\infty)$ is a utility-specific distance metric. The utility loss can be bounded as $L_M < C$ for some bound $C>0$, depending on the maximum tolerance for the underlying task.

To model the empirical privacy loss, we assume an informative adversary $\mathcal{A}$ that uses the prior $\pi$ and has full knowledge of the text generation mechanism $M$ and the parameter $\epsilon$ used (similar to~\cite{shokri2011quantifying,shokri2012protecting}). This adversary uses the posterior probability of each word given the observed perturbed output to make its inference: 
\begin{equation}\label{posterior}
    g_{\mathcal{A}}(\hat{w}|w')\triangleq \frac{\pi(w')f_M(\hat{w}|w')}{\sum_{w\in\mathcal{W}}\pi(w)f_M(w'|w)},
\end{equation}
Thus, from $\mathcal{A}$'s perspective, the expected inference error with respect to $M$ is given by:
\begin{equation}\label{eq_error}
    E_M=\sum_{w,w',\hat{w}}\pi(w)f_M(w'|w)g_{\mathcal{A}}(\hat{w}|w')d_E(\hat{w},w),
\end{equation}
where $d_E:\mathcal{W}\times\mathcal{W}\to[0,\infty)$ is some privacy-specific distance metric. Our goal is, therefore, to find a mechanism within the class of metric-DP mechanisms $\mathcal{M}$ that maximizes the expected inference error $E_M$ while keeping the utility loss $L_M$ below $C$:
\begin{equation}\label{eq_maximize}
M_{\text{optimal}} = \arg\max_{M\in\mathcal{M}}E_{M},\;\;s.t.\;\;L_{M} < C.
\end{equation}
To compare different mechanisms, we will compare their expected inference error $E_M$ under different tolerance thresholds on the expected utility loss $L_M$. We favor mechanisms with high $E_M$, while maintaining $L_M < C$.
%

Note that $d_L$ and $d_E$ do not have to be the same distance metrics. For instance, $d_L$ can depend on the downstream machine learning tasks, like the absolute difference in classification error, perplexity or even cross-entropy loss. From the privacy perspective, a natural choice is $d_E(w,\hat{w})=\mathbbm{1}(\hat{w}\ne w)$, which means the adversary attempts to retrieve the original word from the redacted output and considers the inference attack successful if the inferred word is the exactly same as the input word. Based on applications, the adversary can also choose $d_E$ to be the Euclidean distance such that the goal of the inference attack is to have the inferred word as close to the original word as possible.

\begin{algorithm}[t]\small
\textbf{Input: }Vocabulary $\mathcal{W}$, maximum utility loss $C$, sampler for the Vickrey mechanism $M_t^{\epsilon}$ at any privacy parameter $\epsilon>0$ and tuning parameter $t\in[0,1]$ \\
 \SetAlgoLined
 Initialize $E_{\max} \gets 0, \epsilon \gets \epsilon_0, t\gets 0$  \\
 \While{$L_{M_t^{\epsilon}}\geq C$}{
set $\epsilon=2\epsilon$ 
}
 set $E_{\max}\gets E_{M^{\epsilon}_t},\epsilon_{opt}\gets \epsilon$,
     $t_{opt} \gets 0$\\
     \For{$t\in[0.05, 0.1, \ldots,1]$}{
       If $L_{M^{\epsilon}_{t}}\leq C$ and $E_{M^{\epsilon}_t}>E_{\max}$, \\
     \hspace{0.1in} set $E_{\max}\gets E_{M^{\epsilon}_{t}}$,  $\epsilon_{opt}\gets\epsilon$,
     $t_{opt}\gets t$\\
     }
 \Return $\epsilon_{opt},t_{opt}$.
 \caption{Empirical Parameter Selection for the Vickrey Mechanism}
 \label{alg_tune}
\end{algorithm}

\subsection{Selecting the Tuning Parameter} 
We outline the main steps for optimizing the privacy parameter $\epsilon$ as well as the tuning parameter $t$ in Algorithm \ref{alg_tune}. This optimization is with respect to the empirical privacy-utility tradeoff as laid out in (\ref{eq_maximize}). We initialize with the privacy parameter $\epsilon=\epsilon_0$ at some small initial value $\epsilon_0$ and tuning parameter $t=0$, so that the initial mechanism is essentially a metric-DP mechanism that implements the noisy first nearest neighbor selection. Next, we incrementally double the value of $\epsilon$ until the expected utility loss $L_{M^\epsilon_t} < C$ (recall that a smaller $\epsilon$ typically has larger utility loss\footnote{An implicit assumption we make in Algorithm~\ref{alg_tune} is that $L_M$ increases monotonically with $\epsilon$, following the intuition that a larger noise scale leads to larger utility loss. We defer the discussion around relaxing this assumption to future work.}). Once the maximum $\epsilon$ is obtained, we iterate over different values of $t$ between 0 and 1 (since a monotonicity assumption cannot be made here in general for the behavior of $E_M$). The final parameters $\epsilon_{opt}$ and $t_{opt}$ chosen provide the highest empirical privacy while keeping the utility loss within the specified budget. More importantly, Theorem~\ref{lemma1} ensures that the selected mechanism enjoys at least as much metric DP as the initial mechanism, which implements only the nearest neighbor selection.


\section{Experimental Results}\label{sec:experiments}

\paragraph{Setup.} We evaluate the performance of the proposed Vickrey mechanisms in terms of the empirical privacy-utility tradeoff on three datasets:

\begin{itemize}
    \item The \emph{Product Reviews dataset} consists of a list of 2,006 positive sentiment words and 4,783 negative sentiment words extracted from customer reviews \cite{hu2004mining}. This is a word-level dataset and the metric $d_L$ in expected utility loss is $\mathbbm{1}\{\textrm{sentiment}(w')\ne\textrm{sentiment}(w)\}$, i.e., the loss is incremented when a positive sentiment word is redacted into a negative sentiment word, or vice versa.
    \item The \emph{IMDb Movie Reviews dataset} \cite{maas2011learning} has a total vocabulary size of 145,901, where a pre-specified set of 26,078 words are subject to redaction in the text generation mechanism (those are the words selected for adversarial model training in \cite{jia2019certified}). The utility task is the sentence-level binary sentiment classification, where the underlying model is a bidirectional LSTM using 90\% of the data for training and 10\% for testing.
    \item The \emph{Twitter dataset}  contains 7,613 tweets, with a vocabulary of 22,013 words\footnote{\url{https://www.kaggle.com/c/nlp-getting-started}}. Each tweet is associated with a label indicating whether the tweet describes a disaster event or not. The classification model is a bidirectional LSTM using 9:1 data split for training/testing.
\end{itemize}
 
For all three datasets, we consider both 300-dimensional \textsc{GloVe} embeddings \cite{pennington2014glove} and 300-d \textsc{FastText} embeddings \cite{bojanowski2017enriching}. The empirical privacy measurement uses the adversary's expected inference error rate, i.e. $d_E(\hat{w},w)=\mathbbm{1}(\hat{w}\ne w)$. 
The utility-specific metric $d_L$ is chosen to be the misclassification error rate. 
The prior word distribution is chosen to be the empirical word distribution in the dataset, because we want to assume an informative adversary so as not to underestimate the privacy risk. In the Vickrey mechanism, the distance function is the Euclidean distance, so that $t=0$ is equivalent to the Laplace mechanism \cite{feyisetan2020privacy}. We also compare our results with the Mahalanobis mechanism \cite{xu2020mahalanobis}.

\begin{figure*}[t]
    \centering
    \includegraphics[width=0.95\textwidth]{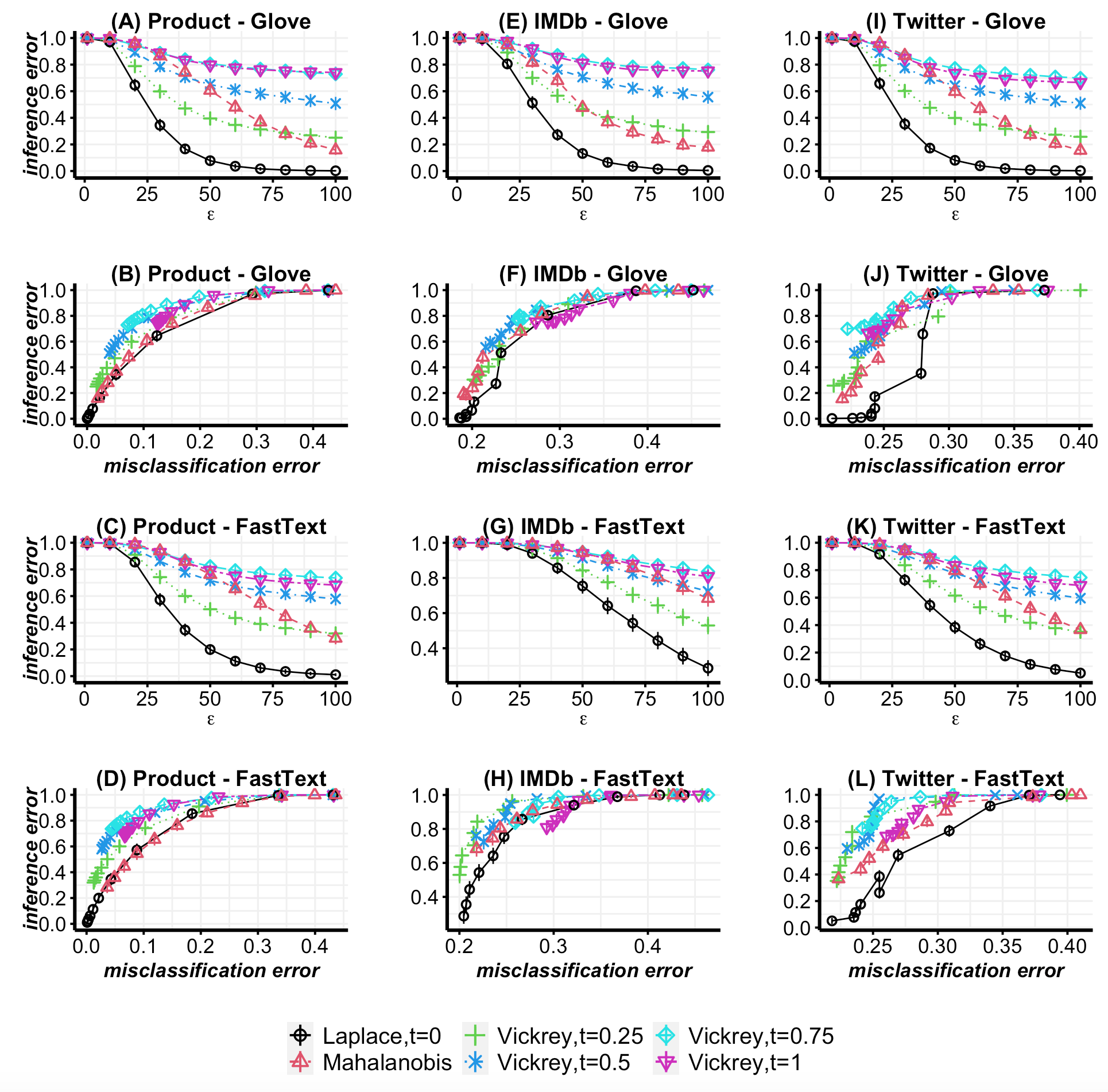}
    \caption{(A): empirical privacy vs $\epsilon$ on Product Reviews using 300-d \textsc{GloVe}. (B): empirical privacy vs utility loss on Product Reviews using 300-d \textsc{GloVe}. (C): empirical privacy vs $\epsilon$ on Product Reviews using 300-d \textsc{FastText}. (D): empirical privacy vs utility loss on Product Reviews using 300-d \textsc{FastText}. (E) - (H) are for IMDb reviews, and (I) - (L) are for Twitter dataset.  }
    \label{combined}
\end{figure*}

\paragraph{Results and Observations.} In Figure 2(A) - 2(D) shows the empirical privacy-utility tradeoff on the Product Reviews between the Laplace mechanism, Mahalanobis mechanism, and the Vickrey mechanisms with tuning parameter at to 0.25, 0.5, 0.75, and 1. The vertical axis in all plots represents the adversary’s inference error in the mechanism. The error bars are computed over 100 runs. 
In the 2(A), the horizontal axis is the privacy budget $\epsilon$. When $\epsilon$ approaches 0, the inference error in all mechanisms approach 1, which is expected because magnitude of the additive noise is large. 
When $\epsilon$ increases, the inference error drops, but the drop in Laplace mechanism is much faster than the other mechanisms. 
It is worth noticing that the curves for Laplace mechanism and the Vickrey mechanisms are mostly parallel with each other: when $t$ increases from 0 to 0.75, a higher value of $t$ is better in terms of empirical privacy at the same $\epsilon$; but when $t$ increases to 1, the empirical privacy will not further increase since the randomness in noisy selection between the first and second nearest neighbor is replaced by the deterministic selection of the second nearest neighbor, which makes the adversary's inference attack easier by finding the second nearest neighbor. 
However, Vickrey mechanism at $t=1$ still dominates Laplace mechanism which only selects the noisy first nearest neighbor for redaction.
The slope for the Mahalanobis mechanism is different from the rest, where intersects with Vickrey mechanisms with different $t$ at different $\epsilon$.
At $\epsilon=100$, the baseline Laplace mechanism has negligible inference error, which means the adversary can almost always make correct guesses, whereas in the other mechanisms the error is still substantial. 

Figure 2(B) plots inference error vs. misclassification error of words (positive words to negative words, or vice versa). When vertically slicing the plot, we see that for each utility loss budget greater than 0.1, a larger value of $t$ will result in a better privacy guarantee.   
When capped at a maximum $\epsilon=100$, the curves with a higher $t$ value will have a higher minimum feasible misclassification error, which is around around 0.02 for $t=0.25$ and Mahalanobis, about 0.04 for $t=0.5$, about 0.06 for $t=0.75$, and about 0.1 for $t=1$. 
This is expected because as more weight is put on the second nearest neighbor, the utility loss becomes larger at large $\epsilon$ values (small noise), because it is more likely that the original word will get changed to its neighbors.
But this loss is upper bounded by the nearest neighbor replacement, which tends to be small as is shown in the experiments in the paper.
The plot suggests that if the user has a maximum utility loss budget of 0.06, they should go with the Vickrey mechanism at $t=0.75$ because when slicing vertically at misclassification error of 0.06, the green curve for $t=0.75$ attains a higher empirical privacy than the other mechanisms. However, when the utility loss budget is 0.02, the user should choose $t=0.25$ because the green line is on top of the other curves (red and black) that can achieve the utility loss of 0.02 (the blue, cyan, and purple curves cannot achieve utility loss within 0.02 when $\epsilon$ is capped at 100). 
Figure 2(C) and 2(D) on Product Reviews using 300-d \textsc{FastText} embedding show similar patterns as those in 2(A) and 2(B) in terms of the privacy-utility tradeoff.

The results and interpretations are qualitative similar in Figure 2(E) - 2(H) on IMDb Movie Reviews and in Figure 2(I) - 2(L) on Twitter. 
In empirical privacy vs $\epsilon$ plots, the  Laplace mechanism consistently has a lower value of empirical privacy measure than the Vickrey mechanism and the Mahalanobis mechanism. This gap in adversary's inference error becomes wider as $\epsilon$ increases. In the privacy vs utility loss plots, the difference between mechanisms is more significant on Twitter than  on IMDb reviews. 
The patterns are consistent across plots, which both show that the Vickrey mechanism can improve the privacy-utility tradeoff beyond the baseline mechanism.  

The difference in the result between \textsc{GloVe} and  \textsc{FastText}, particularly in 2(E) vs. 2(G) and 2(I) vs. 2(K), is due to the difference in inter-word distance distributions between the two embedding spaces (see Figure 1 in \cite{feyisetan2020privacy}). In particular, the inter-word distances are generally smaller in \textsc{FastText} than in \textsc{GloVe}, so that for a fixed noise scale $\epsilon$, the inference error is expected to be larger in \textsc{FastText} than in \textsc{GloVe}.

\section{Generalizing Vickrey Mechanism Beyond the Second Nearest Neighbor}\label{sec:general_construction}

\begin{figure}[t]
    \centering
    \includegraphics[width=\columnwidth]{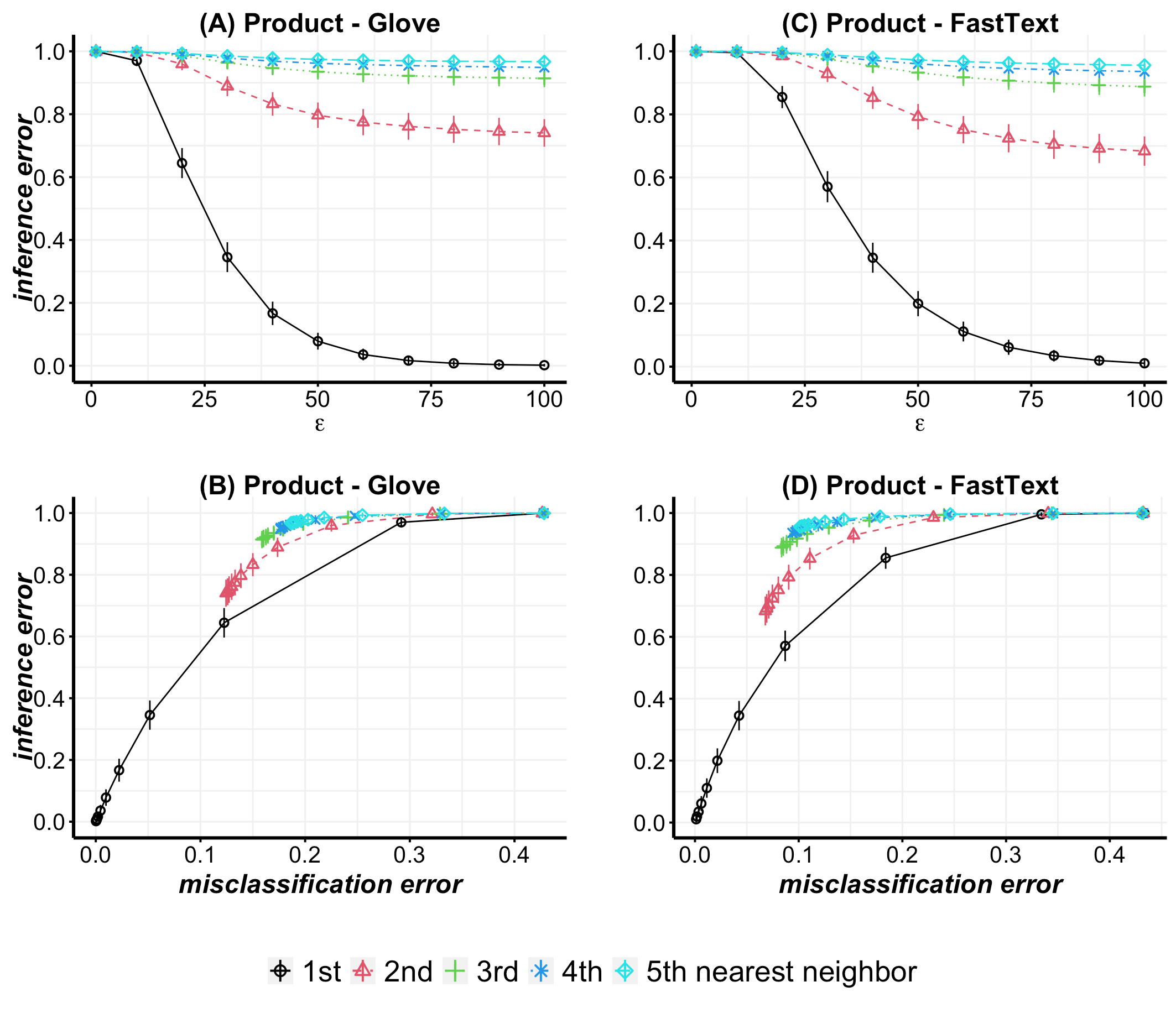}
    \caption{(A): empirical privacy vs $\epsilon$ on Product Reviews using 300-d \textsc{GloVe}. (B): empirical privacy vs utility loss on Product Reviews using 300-d \textsc{GloVe}. (C): empirical privacy vs $\epsilon$ on Product Reviews using 300-d \textsc{FastText}. (D): empirical privacy vs utility loss on Product Reviews using 300-d \textsc{FastText}.}
    \label{knn}
\end{figure}

By a random selection of both the first and the second nearest neighbor to the noised embedding, we have shown that the Vickrey mechanism can empirically improve the privacy-utility tradeoff upon the existing Laplace and Mahalanobis mechanisms. A natural generalization is to extend the selection to $k \geq 2$ nearest neighbors (see Algorithm \ref{alg_generalized}). 

Algorithm \ref{alg_generalized} presents the outline of the generalized Vickrey mechanism that randomly chooses among the noisy $k$ nearest neighbors as output, where the selection probability is inversely associated with their distance to the noised embedding. Similar to Algorithm~\ref{alg_tune}, the tuning parameters are selected as to optimize for the empirical privacy-utility tradeoff, but the selection process will be more challenging because the optimization space is unbounded. We defer the details of this optimization to future work. However, we formally state in Theorem \ref{lemma2} the metric-DP guarantee from the generalized Vickrey mechanism in Algorithm \ref{alg_tune}. Due to space constraints, we defer the details of the proof since it is similar to that for Theorem \ref{lemma1}.

\begin{theorem}\label{lemma2}
For any $t=[t_1,\ldots,t_k]\in [0,\infty)^k$, $k\in\mathbb{Z}_+$, $\epsilon>0$ and  $w,w',\hat{w}\in\mathcal{W}$, the generalized Vickrey mechanism $M^{\epsilon}_{t}$ from Algorithm \ref{alg_generalized} satisfies $\epsilon$ metric-DP for any metric $d$.
\end{theorem}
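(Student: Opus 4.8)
The plan is to mirror the proof of Theorem~\ref{lemma1} essentially verbatim, leaning on the single structural fact that made that argument work: the probability of emitting a given output, \emph{conditioned on the realized noised embedding} $z$, is a function of $z$ and the fixed tuning vector $t$ alone, and never depends on which input word produced $z$. First I would partition $\mathbb{R}^p$ (up to a measure-zero boundary where ties occur) into regions indexed by the ordered list of $k$ nearest neighbors. For an ordered tuple $\sigma=(w_{i_1},\dots,w_{i_k})$ of distinct words, let $R_\sigma\subset\mathbb{R}^p$ be the set of points $v$ for which $w_{i_1},\dots,w_{i_k}$ are, in that order, the $k$ nearest words in $\mathcal{W}$. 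This is the direct $k$-neighbor analogue of the sets $Q^{w_j}_{w_i}$ used in the $k=2$ proof, and these $R_\sigma$ tile the space up to the tie set.

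Next I would record the two ingredients that carry over unchanged. The noise density $p_w(z)\propto\exp(-\epsilon\,d\{z-\phi(w),0\})$ satisfies, by the triangle inequality exactly as before, the pointwise bound $p_w(z)\le e^{\epsilon d\{\phi(w),\phi(w')\}}p_{w'}(z)$ for every $z$. Moreover, when $z\in R_\sigma$, the generalized mechanism outputs $\hat{w}$ with some probability $\beta_\sigma(\hat w,t,z)$ built from the distances $\|z-\phi(w_{i_\ell})\|_2$ and the parameters $t$; the key point is that this factor is identical whether $z$ arose from input $w$ or from $w'$, precisely because it is computed after observing $z$.

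Then I would write $\Pr\{M^\epsilon_t(w)=\hat w\}$ as the sum, over all tuples $\sigma$ that contain $\hat w$, of $\int_{R_\sigma}p_w(z)\,\beta_\sigma(\hat w,t,z)\,dz$. Applying the pointwise density bound inside every integral, the common constant $C(w,w')=e^{\epsilon d\{\phi(w),\phi(w')\}}$ factors out of the whole sum, and the remaining expression is exactly $\Pr\{M^\epsilon_t(w')=\hat w\}$, which yields the claim for every $t\in[0,\infty)^k$.

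The main obstacle is purely bookkeeping rather than conceptual: I must confirm that the $R_\sigma$ genuinely tile $\mathbb{R}^p$ (the tie set has Lebesgue measure zero and so contributes nothing to any integral), and that the combinatorial sum over tuples correctly accounts for each of the $k$ positions in which $\hat w$ can appear among the neighbors. Because $\beta_\sigma$ depends on $z$ only, never on the input, none of the algebra interacts with the density substitution, so the factoring of $C(w,w')$ goes through regardless of the precise functional form of the selection probabilities — which is exactly why the guarantee is insensitive to the choice of $t$ and why the argument reduces to that of Theorem~\ref{lemma1}.
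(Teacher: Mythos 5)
Your proposal is correct and is exactly the argument the paper intends: the paper defers the proof of this theorem as ``similar to that for Theorem~\ref{lemma1},'' and your write-up is precisely that generalization --- partitioning $\mathbb{R}^p$ into regions indexed by ordered $k$-tuples of nearest neighbors (the analogue of the sets $Q^{w_j}_{w_i}$), applying the same triangle-inequality pointwise density bound $p_w(z)\le e^{\epsilon d\{\phi(w),\phi(w')\}}p_{w'}(z)$, and factoring the constant $C(w,w')$ out of the sum of integrals because the selection probabilities depend only on $z$ and $t$, not on the input word. No step is missing relative to the paper's (sketched) proof.
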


In Figure \ref{knn} , we compare 5 generalizations of the Vickrey mechanism that deterministically select the noisy $1^{st},\dots,5^{th}$ neighbors as the output on the Product Reviews data using both 300-d \textsc{GloVe} and  300-d \textsc{FastText}.
We can see that the improvement is most significant between the $1^{st}$ and $2^{nd}$ nearest neighbor.
It also shows that there is benefit in introducing the $3^{rd}$ nearest neighbor into the selection pool, while no big difference is found beyond the $3^{rd}$ neighbor. 
 
\begin{algorithm}[t]\small
\textbf{Input: }String $s=w_1w_2\ldots w_n$, metric $d$, privacy parameter $\epsilon$, tuning parameters $t_1,\ldots,t_k > 0$ \\
 \SetAlgoLined
 \For{$w_i \in s$}{
 	Sample $Z$ with density $p(z)\propto \exp\{-\epsilon d(z,0)\}$ \\
 	Obtain $\hat{\phi}_i \gets \phi(w_i)+Z$\\
 	Let $\tilde{w}_{i1} \gets \substack{\arg\min \\ w \in \mathcal{W}\setminus\{w_i\}} \|\hat{\phi}_i - \phi(w)\|_2$  \\
 	\nonl\hspace{0.2in}	\ldots \\
 	\nonl$\ \ \ \ \ \ \tilde{w}_{ik} \gets \substack{\arg\min \\ w \in \mathcal{W}\setminus\{w_i, \tilde{w}_{i1},\ldots,\tilde{w}_{ik}\}} \|\hat{\phi}_i - \phi(w)\|_2$.\\
 	Set $\hat{w}_i \gets \tilde{w}_{ir}$ with prob. $p_r(t_{1,\dots,k},\hat{\phi}_i)$, where $p_r(t_{1,\dots,k},\hat{\phi}_i) = \frac{\exp\{- t_r\|\phi(\tilde{w}_{ir}-\hat{\phi}_i\|_2\}}{\sum_{j} \exp\{-t_j\|\phi(\tilde{w}_{ij}-\hat{\phi}_i\|_2\}}$ for all $r \in [k]$.
 	}
 \Return $\tilde{s}=\hat{w}_1\hat{w}_2\ldots \hat{w}_n$.
 \caption{\small Generalized Vickrey Mechanism}
 \label{alg_generalized}
\end{algorithm}

\section{Discussion and Conclusion}

In this paper, we present a measurement framework to quantify the empirical privacy-utility tradeoff for metric-DP text generation mechanisms, where the empirical privacy metric is the reconstruction risk of the original text based on the redacted text. 
We adopt a constrained optimization setup, where within the class of metric-DP mechanisms, we maximize the empirical privacy guarantee while keeping the machine learning utility loss under a pre-specified tolerance.
A novel class of Vickrey mechanism is proposed, which not only enjoys metric-DP but also optimizes the privacy-utility tradeoff within the constraint.
We apply our methodology to the three text classification datasets and demonstrate how to empirically compare the privacy-utility tradeoff as well as how to choose the optimal parameter setting according to the constrained optimization. Our results show superior performance when compared to existing mechanisms.

Our analysis in this paper leaves ample room for further investigation. An ongoing work we are exploring is the inclusion of contextual information into the probability calibration between the two nearest neighbors. We leave it as an interesting open problem to explore how the choice of $k^{th}$ neighbor impacts the tradeoff in this scenario, since contextual signals will likely restrict the set of candidate words we can choose from.

\bibliographystyle{acl_natbib}

\bibliography{mybib}


\end{document}